\documentclass[letterpaper]{article} 
\usepackage{aaai2026}  
\usepackage{times}  
\usepackage{helvet}  
\usepackage{courier}  
\usepackage[hyphens]{url}  
\usepackage{graphicx} 
\usepackage{natbib}  
\usepackage{caption} 
\usepackage{algorithm}
\usepackage{algorithmic}

\usepackage{newfloat}
\usepackage{listings}
\DeclareCaptionStyle{ruled}{labelfont=normalfont,labelsep=colon,strut=off} 
\floatstyle{ruled}
\newfloat{listing}{tb}{lst}{}
\floatname{listing}{Listing}
\usepackage{amsmath}
\usepackage{amsthm}
\usepackage{amssymb}
\usepackage[]{mdframed}
\newcommand\defeq{\mathrel{\overset{\makebox[0pt]{\mbox{\tiny def}}}{=}}}
\newtheorem{definition}{Definition}
\usepackage{csquotes}

\newtheorem{theorem}{Theorem}
\newtheorem{corollary}{Corollary}[theorem]

\nocopyright

\title{Formalizing Kantian Ethics: $\mathcal{F}$ormula of the $\mathcal{U}$niversal $\mathcal{L}$aw $\mathcal{L}$ogic ($\mathcal{FULL}$)}
\author {
    Taylor Olson
}
\affiliations{
    Department of Computer Science, University of Iowa\\
    taylolson@uiowa.edu
}

\usepackage{bibentry}

\begin{document}

\maketitle

\begin{abstract}
  The field of machine ethics aims to build \textit{Artificial Moral Agents (AMAs)} to better understand morality and make AI agents safer. To do so, many approaches encode human moral intuition as a set of axioms on actions e.g., do not harm, you must help others. However, this introduces (at least) two limitations for future AMAs. First, it does not consider the agent's purposes in performing the action. Second, it assumes that we humans can enumerate our moral intuition. This paper explores formalizing a moral procedure that alleviates these two limitations. We specifically consider Kantian ethics and present a multi-sorted quantified modal logic we call the \textit{$\mathcal{F}$ormula of the $\mathcal{U}$niversal $\mathcal{L}$aw $\mathcal{L}$ogic} ($\mathcal{FULL}$). The $\mathcal{FULL}$ formalizes Kant's first formulation of the categorical imperative, the Formula of the Universal Law (FUL), and concepts such as causality and agency. We demonstrate on three cases from Kantian ethics that the $\mathcal{FULL}$ can reason to evaluate agents' actions for certain purposes without built-in moral intuition, given that it has sufficient (non-normative) background knowledge. Therefore, the $\mathcal{FULL}$ is a contribution towards more robust and autonomous AMAs, and a more formal understanding of Kantian ethics.
\end{abstract}


\section{Introduction}

To better understand morality and make AI agents safer, the field of machine ethics aims to build \textit{Artificial Moral Agents (AMAs)}. These are AI agents that can reason about concepts such as moral obligation and prohibition, and use this understanding to evaluate the world and guide their own actions. It has been argued that machine ethics must be done top-down  rather than purely bottom-up to avoid making the fallacious inference from Is to Ought \cite{talat2022machine,OLSON202469}. That is, we must provide AI agents with a set of moral axioms and methods for reasoning from them to yield moral norms for concrete situations. But the critical question is then: \textit{what are the moral axioms?}

Current approaches encode a set of moral axioms pre/proscribing certain actions e.g., do not harm, do not lie, you must help others \cite{anderson2005medethex,bringsjord2014nuclear,malle2017networks,serramia2018exploiting,olson2023mitigating}. However, this provides AMAs with only a weak sense of morality. First, the axioms evaluate only actions (action-centric) and not the agent's purpose in performing the action (agent-centric); thus, for example, they cannot distinguish between reducing pollution out of duty and doing so for public relations. Second, it assumes that we humans can enumerate our moral intuition a priori, when further judgment is often needed to resolve conflicts and exceptions in future situations i.e., the problem of value alignment. For example, without further intuition built in, it would evaluate a surgeon cutting into their willing patient as impermissible given that it causes harm. Third, given that the AMAs cannot critique the moral axioms, they cannot participate in moral discourse about the actions of such axioms. Because of these limitations, current approaches lack the moral robustness and autonomy necessary for Artificial Moral Agents. 

In this paper we explore Kantian ethics \cite{ebels2011kantian} as an alternative foundation for AMAs. We specifically formalize Kant's first formulation of the categorical imperative (CI), the \textit{Formula of the Universal Law (FUL)}, as a moral axiom. The FUL holds that you must \textit{``act only in accordance with that maxim through which you can at the same time will that it should become a universal law''} \cite{kant1998immanuel}. The FUL is appealing as it grounds moral evaluation in equality via universalization. The FUL is also agent-centric and is thus capable of evaluating actions and their purposes. Most important for machine ethics, it is also quite formal as it does not pre/proscribe any specific action. Thus, we do not need to enumerate our moral intuition for it to perform moral evaluation. Therefore, formalizing the FUL will yield more robust and autonomous AMAs.

Researchers have recently made progress on formalizing Kant's second, less formal formulation, the Formula of Humanity \cite{lindner2018formalization}. However, as they state, ``the formalization of the first formulation in effect remains an open problem'' (with the exception of one recent paper \cite{singh2022automated}). In this paper we make progress on this problem by presenting a quantified multi-sorted quantified modal logic we call the \textit{$\mathcal{F}$ormula of the $\mathcal{U}$niversal $\mathcal{L}$aw $\mathcal{L}$ogic}, or $\mathcal{FULL}$. With the $\mathcal{FULL}$ we contribute a formalization of the FUL that: 1) Evaluates actions done for certain purposes (agent-centric), rather than only actions (action-centric); 2) Requires less moral intuition to be encoded by humans; 3) Formally demonstrates core concepts of Kantian ethics.

We start by providing background on Kantian ethics in Section \ref{sec:background}. Next, in Section \ref{sec:formalism} we introduce the $\mathcal{FULL}$. We describe its syntax, proof calculus, and the semantics of its theories of action, causality, agency, and moral evaluation. Next, in Section \ref{sec:demonstration} we demonstrate with formal proofs that the $\mathcal{FULL}$ can reason to produce moral norms. Then, in Section \ref{sec:cc-cw} we illustrate how it formalizes Kant's ontology of contradictions and duties. Lastly, we discuss related and future work.

\section{Background}\label{sec:background}

Immanuel Kant was an 18th century German philosopher whose works spanned many fields such as ethics, epistemology, and metaphysics. His ethical theory falls under \textit{deontology} as it focuses on reasons and principles of action, rather than their external consequences as in \textit{consequentialism}, or virtuous nature as in \textit{virtue ethics}. 

In the \textit{Groundwork of the Metaphysics of Morals} \cite{kant1998immanuel}, Kant aims to discover the laws that govern the will (practical reason) as laws of logic govern belief (theoretical reason). In other words, he aims to discover our moral axioms. He ends up deriving three different formulations of a single axiom called the \textit{categorical imperative}. The first formulation of the categorical imperative is called the \textit{Formula of the Universal Law (FUL)}. It is the most formal of the three and thus best serves our purposes in machine ethics. This law holds that you ought to:
\begin{displayquote}
``Act only in accordance with that maxim through which you can at the same time will that it should become a universal law'' \cite{kant1998immanuel}.
\end{displayquote}

An agent's \textit{will} is the faculty that deliberates and decides how to act. It is the mind in its practical state (reasoning about what to do), rather than its theoretical state (reasoning about what to believe). A \textit{maxim} is then an agent's subjective principle of the will. Maxims take the form, ``I will do $\langle$Action$\rangle$ for $\langle$Purpose$\rangle$.''

To clear up a common misconception, by ``\textit{can} will that your maxim should be become a universal law'', Kant does not mean ``want.'' The Kantian agent does not determine if they would \textit{want} other agents to act as they do, as they would do under a Golden Rule. Instead, the agent determines if they \textit{can}, \textit{without contradiction}, imagine a world in which everyone acts on their maxim.

For example, imagine that Karli intends to travel but needs money. Karli is considering making a promise to repay her friend Jan with no intention of returning the money. The FUL commands Karli to imagine a world in which everyone false promises to get money, and to not perform her action if it would be contradictory in this world. This imagined world is often called the \textit{World of the Universal Law (WUL)}.

Philosophers are divided on what is (or should be) meant by ``contradiction.''  Various interpretations have been offered, but they are usually categorized as either \textit{logical contradiction interpretations (LCI)}, \textit{teleological (TCI)}, or \textit{practical (PCI)}. The LCI holds that contradictions occur when one cannot logically conceive of their action in the WUL. For example, Karli cannot even conceive of promises in the WUL, as the practice of promising would not exist in a world where everyone false promised. However, the LCI on its own does not handle natural actions such as murder, given that their existence does not depend on any conventions, and thus can still be conceived of in the WUL. Therefore, for natural actions, and various maxims of omission, the LCI must introduce a different kind of contradiction.

The teleological contradiction interpretation (TCI) instead posits that rational agents possess innate purposes and ground contradictions in that which does not align these purposes. For example, murder would be wrong because in the WUL it contradicts our natural purpose of happiness.

Lastly, the practical contradiction interpretation (PCI), championed by O'Neill \cite{o2013acting} and Korsgaard \cite{korsgaard1996creating}, grounds contradictions in effectiveness. Under the PCI, the FUL asks the agent to determine if their action would still be an effective means of achieving their purpose in the WUL. If it becomes ineffective, then it is morally impermissible. For example, Karli's false promise to Jan would be an ineffective means of getting money in the WUL, as Jan would not believe it in a world where everyone who needs money false promises. Importantly, the PCI also handles natural actions. For example, murdering for money would be ineffective in the WUL given that everyone else would just murder you for this money as well.

Here we formalize the FUL under a practical contradiction interpretation, as it handles notoriously difficult cases (e.g., violent actions) without introducing ad hoc contradiction types like the LCI, or positing teleological assumptions like the TCI. Restated under a PCI, the FUL holds that: \textit{acting on your maxim is permissible if and only if the action of your maxim is an effective means for achieving the purpose of your maxim in the world of the universal law}. To provide a better moral procedure for Artificial Moral Agents and a formal interpretation of Kantian ethics, we formalize the FUL and necessary concepts of agency in a modal logic. We describe this formalism next.

\section{Formalism}\label{sec:formalism}

We formalize the FUL in a multi-sorted quantified modal logic we call the $\mathcal{F}$ormula of the $\mathcal{U}$niversal $\mathcal{L}$aw $\mathcal{L}$ogic ($\mathcal{FULL}$). In Figure \ref{fig:languageFUL} we define the syntax of $\mathcal{FULL}$ in BNF including a system of sorts $S$, a grammar for terms $t$, and a grammar for sentences $\phi$. We assume a signature for function and predicate symbols of the domain being modeled. The proof calculus for $\mathcal{FULL}$ is based on natural deduction \cite{jaskowski1934rules} and thus includes all elimination and introduction rules for first-order logic and additional rules for modal operators. We list rules for modal operators at the bottom of Figure \ref{fig:languageFUL}. We denote syntactic entailment as $\vdash_{\mathcal{FULL}}$.

\begin{figure*}[t]
\begin{mdframed}
\centering

\textbf{Syntax}

\begin{alignat*}{2} & S ::= \ &&Object \ | \ Agent \ | \  Action \\
& t ::= &&x : S \ | \ c: S \ | \ f(t_1,...,t_n) \\
& \phi ::= && \top \ | \ \bot \ | \ \neg \phi \ | \ \phi_1 \land \phi_2 \ | \ \phi_1 \lor\phi_2 \ | \ \phi_1 \rightarrow \phi_2 \ | \  \phi_1 \leftrightarrow \phi_2 \ | \ \forall x:S.\phi \ | \ \exists x: S. \phi \ | \  t_1 = t_2 \\
& && Does(\langle Agent \rangle,\langle Action \rangle) \ | \ Wills(\langle Agent \rangle,\phi) \ | \ For(\phi_1,\phi_2) \ | \ Causes([\neg]Does(\langle Agent \rangle,\langle Action \rangle),\phi) \\
& &&Perm(For(\phi_1,\phi_2)) \ | \ Imp(For(\phi_1,\phi_2)) \ | \ Obl(For(\phi_1,\phi_2))
\end{alignat*}

\vspace{3mm}
\textbf{Rules of Inference}

\begin{align*}
\frac{Causes(\phi_1,\phi_2)}{\phi_1\rightarrow \phi_2} \quad [R_1]
\quad \quad 
\frac{Wills(a,\phi)}{\neg Wills(a,\neg\phi)} \quad [R_2]
\end{align*}

\begin{align*}
\frac{\phi_1 \rightarrow \phi_2}{Wills(a,\phi_1)\rightarrow Wills(a,\phi_2)} \quad [R_3] 
\quad \quad 
\frac{}{\exists \psi \in \mathcal{FULL}.Wills(a,\phi) \rightarrow \psi}  \quad [R_4]
\end{align*}

\begin{align*}
\frac{For(\phi_1,\phi_2)}{\phi_1 \land Wills(\alpha(\phi_1),\phi_2)} \quad [R_5]
\quad \quad 
\frac{For(\phi_1,\phi_2)}{
Causes(\phi_1,\phi_2)} \quad [R_6]
\end{align*}

\caption{The syntax of the Formula of the Universal Law Logic ($\mathcal{FULL}$), specified in Backus–Naur Form, and its rules of inference. We assume a first-order signature for function and predicate symbols of the domain being modeled. [] indicates optional components.}
\label{fig:languageFUL}
\end{mdframed}
\end{figure*}

We describe the components of $\mathcal{FULL}$ in the next sections. To help illustrate, we continue with our running example of false promising. Consider constants $karli$ and $jan$ of sort $Agent$, the function $falsePromise$ that takes an agent and returns a term representing a $FalsePromise$ to that agent (where $FalsePromise$ is a sub-sort of $Promise$ and further $Action$), and the unary predicate $HasTravelMoney$ on the set of agents. 

\begin{align*}
& karli, jan: Agent \\
& falsePromise: Agent \rightarrow FalsePromise \\
& HasTravelMoney(\langle Agent\rangle) 
\end{align*}

\subsection{Actions and Causal Laws}
We formalize an action theory with the first-order predicate $Does$ over agents and actions. We assume actions are represented as non-atomic terms.

\begin{definition}[Does/Omits]\label{def:does-omits}
Where $a:Agent$ and $b:Action$, we represent the doing of $b$ by $a$ with the first-order predicate $Does(a,b)$. The omission of $b$ by $a$ is then represented as $\neg Does(a,b)$. These are to be read as ``agent $a$ does/omits action $b$,'' respectively.

The accessor functions $\alpha$ and $\beta$ yield the agent and action terms of a $Does$ statement, respectively. 
\end{definition}

For example, $karli$ false promising $jan$ is represented as $Does(karli,falsePromise(jan))$. Where $\phi$ is this $Does$ statement, $\alpha(\phi) = karli$ and $\beta(\phi)=falsePromise(jan)$.

Next, we formalize causal laws with a modal operator $Causes$, as causality is central to moral theory.

\begin{definition}[Causal Law]\label{def:causal-law}
Where $a : Agent$, $b: Action$, and $\phi$ is a formula, causal laws are represented as:
\begin{align*}Causes([\neg]Does(a,b),\phi).\end{align*}
This is to be read as ``agent $a$ doing/omitting action $b$ causes $\phi$ to be true.''
\end{definition}

For example, false promising to repay money being a cause of having money to travel is represented as the causal law below.
\begin{align*}
\forall a_1,a_2:Agent.Causes(Does(a_1,falsePromise(a_2)),\\
HasTravelMoney(a_1)).
\end{align*}

Note that by this definition omissions have causal status, which is still a topic of  philosophical debate \cite{clarke2012omission}. However, it seems clear that omitting to help a drowning child has moral import. Accepting examples such as this requires accepting that omission has causal status.

While providing a full semantic account of causality here is out of scope, we do provide a working proof-theoretic definition of causality via inference rule $R_1$ that serves our purposes of illustrating the FUL.
\begin{align*}R_1: Causes([\neg]Does(a,b),\phi)\vdash_{\mathcal{FULL}}\\
[\neg] Does(a,b)\rightarrow \phi\end{align*}

Intuitively, rule $R_1$ holds that if doing/omitting an action causes some state of affairs to be true, then when it is true that you do/omit that action, those state of affairs are also true. We note that this definition is consistent with (though weaker than) contemporary model-theoretic definitions of causality \cite{halpern2016actual}, and future work will be adapting such accounts for the $\mathcal{FULL}$.

\subsection{Willing}
An agent's will is the faculty that deliberates and decides how to act i.e., the thing that \textit{does} practical reasoning. It has as its object a purpose, or a state of affairs the agent aims to realize. We formalize this with the modal operator $Wills$.

\begin{definition}[Willing]\label{def:willing}
Where $a : Agent$ and $\phi$ is a formula, $a$ willing $\phi$ is represented as:
\begin{align*}Wills(a,\phi).\end{align*}
This is to be read as ``agent $a$ wills the truth of $\phi$.''
\end{definition}

For example, $karli$ willing to have money to travel is represented as: $Wills(karli,HasTravelMoney(karli))$.

The semantics of willing stems from rules $R_2,R_3,$ and $R_4$. We describe each next.

\begin{definition}[$R_2$: Consistency of the Will]\label{def:rule-R_1}
Willing something entails that you do not will the opposite.
\begin{align*} R_2:Wills(a,\phi)\vdash_{\mathcal{FULL}} \neg Wills(a,\neg\phi)\end{align*}
\end{definition}
Rule $R_2$ is a consistency axiom common to modal logic \cite{modallogicSEP}.

\begin{definition}[$R_3$: Monotonicity of the Will] Willing is closed under entailment i.e., willing a fact entails that you will all that fact entails.
\begin{align*}
    R_3: \phi \rightarrow \psi \vdash_{\mathcal{FULL}} Wills(a,\phi)\rightarrow Wills(a,\psi)
\end{align*}
\end{definition}
Rule $R_3$ is a rule of monotonicity and common to modal logic as well. An agent cannot be committed to achieving some state of affairs, know what facts this entails, and not also be committed to those facts.

\begin{definition}[$R_4$: Necessities of the Will] \label{def:R_3-necessities-of-the-will}
The will has necessary requirements.
\begin{align*}R_4: \ \vdash_{\mathcal{FULL}}\exists \psi \in \mathcal{FULL}.Wills(a,\phi) \rightarrow \psi \end{align*}
\end{definition}

Rule $R_4$ holds that willing necessitates certain facts. For example, a rational agent must be free, alive, and so on in order to will anything at all. We do not provide a complete set here, but we do argue that willing necessitates being alive, formalized as $R_4'$ below.
\begin{align*}
    R_4': \quad \vdash_{\mathcal{FULL}} Wills(a,\phi) \rightarrow Alive(a)
\end{align*}

\subsection{Maxims} \label{sec:maxim}

A \textit{maxim} is a principle of practical reasoning. Maxims take the form $\langle \phi_1,\phi_2 \rangle $, where $\phi_1$ is the action that will be done or omitted and $\phi_2$ is the purpose for which $\phi_1$ is done/omitted.\footnote{Philosophers have also often considered a third component $C$ as a condition on which $\phi_1$ will be done/omitted for $\phi_2$ \cite{o2013acting}. This is consistent with contemporary work in conditional deontic logic \cite{gabbay2013handbook}. Though we agree that context plays a role in moral reasoning, including it as a component in maxims introduces issues of tailoring maxims to yield desired permissibility judgments \cite{herman1993practice}. Future work will be exploring how to safely put contextual information into the maxim.} We formalize maxims with the modal operator $For$, as described below.

\begin{definition}[Maxim] \label{def:maxim}
Where $a:Agent$, behavior $\phi_1$ is a (possibly quantified and/or negated) $Does$ statement, purpose $\phi_2$ is a first-order formula, maxims are formalized with the modal operator \textit{For}, written as:
\begin{align*}For(\phi_1,\phi_2).\end{align*}

This is to be read as ``I will $\phi_1$ for (to achieve) $\phi_2$.''
\end{definition}

For example, $karli$ false promising $jan$ in order to have money to travel would be represented as below.
\begin{align*}
For(Does(karli,falsePromise(jan)),\\
HasTravelMoney(karli))
\end{align*}

We capture the meaning of maxims with rule $R_5$ and $R_6$, described below.
\begin{align*}
 R_5: For(\phi_1,\phi_2) \vdash_{\mathcal{FULL}} \phi_1\land Wills(\alpha(\phi_1),\phi_2)
\end{align*}

Rule $R_5$ holds that doing/omitting an action for some purpose entails that you do/omit that action and you will that purpose. Thus, we relate the modal, purposeful concept of doing/omitting with the first-order, non-purposeful concept of doing/omitting and the concept of willing.
\begin{align*}R_6: For(\phi_1,\phi_2) \vdash_{\mathcal{FULL}} Causes(\phi_1,\phi_2)
\end{align*}

Rule $R_6$ holds that willing a maxim entails that the action is a cause of the purpose. A rational agent willing an end conceptually contains their willing of the necessary means, and thus willing the corresponding maxim. Therefore, acting on this maxim entails that their action/omission is a means (cause) for the end (purpose), insofar as they are rational. In other words, if you do/omit some action to attain some purpose, then, given that you are a rational agent, you must take this doing/omitting to be a cause of that purpose.

\subsection{Universal Laws}
The Formula of the Universal Law takes in maxims and yields moral norms via a universalization step. This step takes as input a specific agent's maxim, which is of the form: ``I will do/omit behavior $\phi_1$ for purpose $\phi_2$,'' and outputs a universal law holding that $\phi_1$ is the universal means of achieving $\phi_2$. That is, everyone with a purpose equivalent to $\phi_2$, does/omits an action equivalent to $\phi_1$. Again, this hypothetical world is often called the \textit{World of the Universal Law (WUL)}.

So, how do we \textit{formally} determine when universalized behaviors and purposes are relatively equivalent to their specific counterparts? It can't be strict equivalence, requiring the same objects used, object acted on, time done, location, and so on. For example, the universal law of $karli$'s maxim cannot be that everyone false promises $jan$ specifically, at the same time and location. Instead we must determine some equivalence between the relevant type description of concepts in the maxim. For example, not false promising $jan$ specifically, but false promising some agent that serves as an equivalent substitution for $jan$, relative to the agent performing the action. Others have pointed out the challenges involved in automating this process \cite{o2013acting,herman1993practice}. We provide a working solution as a 
syntactic quantified variable substitution in definition \ref{def:ul}.

\begin{definition}[Universal Law] \label{def:ul}

Let $C(\phi)$ denote the set of constant symbols within formula $\phi$. Given agent $a$'s maxim $M=For(\phi_1,\phi_2)$, the Universal Law of $M$ is created via the transformation $UL(M)$ given the following structures.

\begin{itemize}
    \item An infinite set of variables $V$ partitioned into sets for each respective sort. E.g., $V= V_{Object} \cup V_{Agent} \cup V_{Action}$;
    \item A finite set $T_{\phi_2}=\{a\} \cup C(\phi_2)$. E.g., $\{karli:Agent\}$. I.e., the agent and all constants within their purpose;
    \item A finite set $T_{\phi_1}=C(\phi_1)\setminus T_{\phi_2}$. E.g., $\{jan:Agent\}$. I.e., all constants within the agent's behavior, excluding the agent and those in the their purpose;
    \item A one-to-one mapping $\sigma:T_{\phi_1}\cup T_{\phi_2} \rightarrow V$, respecting sort. E.g., $\sigma  = \{karli \mapsto a_1,jan \mapsto a_2\}$, where $a_1,a_2\in V_{Agent}$;
    \item A finite set $T'_{\phi_2}= \{ c \sigma : c \in T_{\phi_2} \}$. E.g., $\{a_1:Agent\}$. I.e., sorted variables for the agent and each constant within their purpose;
    \item A finite set $T'_{\phi_1}= \{ c \sigma : c \in T_{\phi_1} \}$. E.g., $\{a_2:Agent\}$. I.e., sorted variables for each constant within the agent's behavior, excluding the agent and those in their purpose.
\end{itemize}

The transformation $UL(M)$ is then defined as:
\begin{align*}
    UL(M)=\langle univ \rangle Wills(a\sigma,\phi_2')\rightarrow \langle exist \rangle \phi_1',
\end{align*}
where:
\begin{itemize}
    \item $\langle univ \rangle =$ map $\forall$ onto $T'_{\phi_2}$. E.g., $\forall a_1:Agent$;
    \item $a\sigma$ is the variable that $a$ maps to;
    \item $\phi_2' = \phi_2\sigma$. I.e., $\phi_2$ with all constants substituted for variables. E.g., $\phi_2'=HasTravelMoney(a_1)$;
     \item $\langle exist \rangle =$ map $\exists$ onto $T'_{\phi_1}$. E.g., $\exists a_2:Agent$;
    \item $\phi_1' = \phi_1\sigma$. I.e., $\phi_1$ with all constants substituted for variables. E.g., $\phi_1'=Does(a_1,falsePromise(a_2))$.
\end{itemize}

\end{definition}

Informally, function $UL$ transforms a maxim to a universal law by universally quantifying the agent doing the willing and all other constants within the purpose (set $T_{\phi_2}$), and existentially quantifying all constants within the behavior (set $T_{\phi_1}$). For example, take $M$ to be $karli$'s previously listed maxim of false promising to $jan$. $UL(M)$ is computed via the mapping $\sigma  = \{karli\mapsto a_1,jan\mapsto a_2 \}$:
\begin{align*}
\forall a_1:Agent.Wills(a_1,HasTravelMoney(a_1)) \rightarrow \\
\exists a_2:Agent.Does(a_1,falsePromise(a_2)).
\end{align*}
In English, this universal law states, ``all agents that will to have money to travel, tell false promises to some agent.''

\subsection{Moral Norms}

The FUL outputs moral norms evaluating the input maxims. We formalize moral norms with three modal operators (common in Deontic Logics \cite{deonticlogicSEP}): $Permissible,Impermissible,Obligatory$, abbreviated as $Perm,Imp,Obl$ throughout. We point out that because deontic operators take maxims as objects, they represent the morality of doing actions \textit{for certain reasons}. This agent-centric approach differs from previous deontic logics that evaluate only actions. As we have stated, this is an important step for the field of machine ethics, as AMAs should not only do the right thing, but also for the right reasons. We provide a formal definition of moral norms below.

\begin{definition}[Moral Norm] \label{def:moral-norm}
Where deontic operator $\mathcal{D}\in \{Perm,Imp,Obl\}$, $\phi_1$ is a (possibly quantified and/or negated) $Does$ formula, $\phi_2$ is a first-order formula of $\mathcal{FULL}$, a moral norm is of the following form.
\begin{align*}
& \mathcal{D}(For(\phi_1,\phi_2))
\end{align*}
This can be read as ``doing/omitting $\phi_1$ for purpose $\phi_2$ is permissible/impermissible/obligatory.''
\end{definition}

For example, Karli's maxim being impermissible would be represented as:
\begin{align*}
Imp(For(Does(karli,falsePromise(jan)),\\
HasTravelMoney(karli))).
\end{align*}

We define deontic modalities in relation to each other in standard fashion.
\begin{align*}
&Imp(For(\phi_1,\phi_2)) \defeq \neg Perm(For(\phi_1,\phi_2)) \\ 
&Obl(For(\phi_1,\phi_2)) \defeq Imp(For(\neg \phi_1,\phi_2))
\end{align*}

\subsection{Formula of the Universal Law}

The meaning of deontic operators stems from the Formula of the Universal Law: \textit{it is permissible to act on your maxim if and only if its action is an effective means for its purpose in the World of the Universal Law}.  We formalize the FUL with rule $FUL$ in definition \ref{def:ci}.

\begin{definition}[Formula of the Universal Law] \label{def:ci}
Where $\Gamma$ is a consistent set of formulae of $\mathcal{FULL}$ describing a world, $a:Agent$ with maxim $M$, rule $FUL$ is formalized as below. 
\begin{align*}
    FUL: \quad &\Gamma \vdash_{\mathcal{FULL}} Perm(M) \Leftrightarrow \\
    &\Gamma, Wills(a, UL(M)) \land  Wills(a, M) \nvdash_{FUL} \bot
\end{align*}

Default assumption (negation as failure): if a maxim cannot be proven as permissible, then it is not permissible.
\begin{align*}
\Gamma \nvdash_{\mathcal{FULL}} Perm(M) \Rightarrow \Gamma \vdash_{\mathcal{FULL}} \neg Perm(M)\end{align*}
\end{definition} 

Note that we do not explicitly define the notion of effectiveness from the PCI. This is because effectiveness is actually a secondary concept derived from principles of rational agency \cite{korsgaard1996creating}. A maxim becomes ineffective when the causal law from which it is derived becomes false. This is contradictory for the agent because they are acting on their maxim as well, and thus take their action to be a cause of their purpose. In other words, the agent wills that their maxim is both ineffective and effective. This is the underlying derivation of the PCI.

Our interpretation and formalization of the FUL can be summarized as follows. We assume all agents are rational (formalized with rules for willing). We then formalize the FUL as a negative test consisting of three reasoning steps: 1) The agent wills a world in which the universal law is true: $Wills(a,UL(M))$; 2) The agent wills to act on their maxim in that same world: $Wills(a,M)$; 3) The agent simulates to determine if the previous two steps would result in a contradiction: $\vdash_{FUL} \bot$. If it does result in a contradiction, then by Modus Tollens (MT) from rule $FUL$ and given our default assumption, acting on maxim $M$ is impermissible. \textit{To say otherwise would be to take oneself as an exception, as the only difference between the WUL and one's own world is the universal law}. This is Kant's central moral axiom, and it is quite powerful. From purely formal principles of rational agency, causality, and background knowledge it can produce moral norms. We demonstrate this in the next section.

\section{Demonstration}\label{sec:demonstration}

In this section we use canonical examples from Kantian ethics to demonstrate how the $\mathcal{FULL}$ reasons to moral norms. We focus on critiquing input maxims and thus assume the cognitive process that formulates them. We also introduce function and predicate symbols on the fly, but their signature is made clear by example. We start by demonstrating on the case of false promising.

\subsection{Example 1: False Promising}\label{sec:false-promise}

Let $\Gamma$ be a consistent set of formulae of $\mathcal{FULL}$ describing a world. Let $karli$ be our agent of interest and $jan$ be another agent from this world. Consider $karli$'s maxim $M:$ \textit{``I will make a false promise to repay money to $jan$ in order to get money to travel,''} represented below.
\begin{align*}
    M=For(Does(karli,falsePromise(jan)), \\
    HasTravelMoney(karli))
\end{align*}

Acting on maxim $M$ is intuitively impermissible. Computing this moral norm in $\mathcal{FULL}$ means computing $\Gamma \vdash_{\mathcal{FULL}} Imp(M)$ from rule $FUL$. To do so, we must assume $karli$ possesses common sense and causal knowledge of her world. We formalize three facts necessary for this case below. Assume $B1,B2,B3 \in \Gamma$.

\textbf{B1:} There is another agent that wills travel money.\footnote{Though this is a strong assumption, it is justified by an epistemic lack of $karli$'s. $karli$ can never be completely certain that there are not other agents that share her same pursuits.}
\begin{align*}\exists & a_1: Agent. \\
&a_1 \neq karli \land Wills(a_1,HasTravelMoney(a_1))\end{align*}

\textbf{B2: }Promising to repay money causes getting money to travel only if the promisee believes said promise.
\begin{align*}
\forall & a_1,a_2:Agent.\\
&Causes(Does(a_1,promise(a_2)),  \\
 & \quad \quad \quad \quad HasTravelMoney(a_1)) \rightarrow\\
&BelievesPromise(a_2, promise(a_2))
\end{align*}

\textbf{B3: }Agents believe false promises to repay only if agents in their world do not break their promises to repay.
\begin{align*}
    \forall & a_2 : Agent.\\
    & BelievesPromise(a_2, promise(a_2))\rightarrow \\
    &\forall  a_3,a_4 : Agent.\neg Does(a_3,falsePromise(a_4))
\end{align*}

The universal law of $karli$'s maxim $UL(M)$ is \textit{``everyone who wills to have travel money, makes false promises to repay''} and is formally represented below.
\begin{align*}
    &UL(M)= \\
    & \forall a_1: Agent.Wills(a_1,HasTravelMoney(a_1))\rightarrow \\
    & \exists a_2:Agent.Does(a_1,falsePromise(a_2))
\end{align*}

Next, we formally prove in $\mathcal{FULL}$ that maxim $M$ is impermissible in the world described by $\Gamma$. Again, this works in three steps: 1) will the universal law $UL(M)$, 2) will your maxim $M$, and 3) check for contradiction.

\begin{theorem}$\Gamma \vdash_{\mathcal{FULL}} Imp(M)$\label{thm:false-promising}
\end{theorem}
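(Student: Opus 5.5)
The overall strategy is to reduce the statement to a derivation of $\bot$. By the definition $Imp(M) \defeq \neg Perm(M)$ and the default assumption attached to rule $FUL$, it suffices to show $\Gamma \nvdash_{\mathcal{FULL}} Perm(M)$. By the biconditional in rule $FUL$ (read via Modus Tollens), that in turn follows once we exhibit a derivation
\begin{align*}
\Gamma, Wills(karli, UL(M)) \land Wills(karli, M) \vdash_{FUL} \bot .
\end{align*}
So the whole proof will consist of assuming the two willed premises and deriving a contradiction. Throughout, the reasoning takes place \emph{inside} the scope of $karli$'s will, using $R_3$ to push entailments through $Wills$ and $R_2$ to close off with a contradiction.

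\textbf{Step 1: use the universal law.} We start from $Wills(karli, UL(M))$. Inside the willed world we combine $UL(M)$ with B1, which supplies an agent $a_1 \neq karli$ with $Wills(a_1,HasTravelMoney(a_1))$. Instantiating the universal law at $a_1$ gives $\exists a_2.Does(a_1,falsePromise(a_2))$, and hence $\neg\forall a_3,a_4.\neg Does(a_3,falsePromise(a_4))$. By contraposing B3 (taking $a_2 = jan$) we obtain $\neg BelievesPromise(jan,promise(jan))$. Since $falsePromise$ is a sub-sort of $promise$, B2 instantiated at $karli, jan$ then yields $\neg Causes(Does(karli,falsePromise(jan)),HasTravelMoney(karli))$. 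Wrapping this chain as a single implication from $UL(M)$ (together with the background facts) to the negated causal law, and applying $R_3$, gives
\begin{align*}
Wills(karli,\neg Causes(Does(karli,falsePromise(jan)),HasTravelMoney(karli))).
\end{align*}

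\textbf{Step 2: use the maxim.} From $Wills(karli,M)$, rule $R_6$ lets us pass from $M$ to the causal law, so by $R_3$ we get
\begin{align*}
Wills(karli,Causes(Does(karli,falsePromise(jan)),HasTravelMoney(karli))).
\end{align*}
Applying $R_2$ to this formula produces $\neg Wills(karli,\neg Causes(\dots))$, which contradicts the result of Step 1. That gives $\bot$, and hence $Imp(M)$.

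\textbf{Main obstacle.} The hard part is justifying the premise transfer in Step 1. $R_3$ requires an implication $\phi\rightarrow\psi$ that is derivable, but B1--B3 are facts in $\Gamma$, not theorems. So we must take the implication to be $UL(M)\rightarrow\neg Causes(\dots)$, derived from $\Gamma$ by $\rightarrow$-introduction, and treat $R_3$ as applicable to implications derivable from $\Gamma$. A further gap concerns B1: it asserts that $a_1$ wills in the actual world, and we need that fact available inside the willed world. We read it as a plain fact in $\Gamma$, usable under the conditional. Two smaller points also need care. B2 is stated for $promise(a_2)$ while the maxim uses $falsePromise(jan)$, so we will appeal to the sub-sort convention to identify the two. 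We will also check that $R_6$ may be applied to $M$ as a formula sitting inside $Wills$.
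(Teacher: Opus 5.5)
Your proposal is correct and follows essentially the same route as the paper: it pushes B1--B3 through $R_3$ to get $Wills(karli,\neg Causes(\ldots))$ from $Wills(karli,UL(M))$, and it uses $R_6$ with $R_3$ to get $Wills(karli,Causes(\ldots))$ from $Wills(karli,M)$; it then clashes the two with $R_2$ and concludes $Imp(M)$ by Modus Tollens on rule $FUL$ plus the negation-as-failure default. The differences are cosmetic: you compose the chain into one implication before a single $R_3$ step, instantiate at $karli$/$jan$ early, and apply $R_2$ to the positive causal-law willing, which avoids the paper's double-negation step; the gaps you flag (applying $R_3$ to $\Gamma$-dependent implications, identifying $promise$ with $falsePromise$) are the same informal moves the paper's own proof makes.
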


\begin{proof}
As the FUL is a negative test, we proceed with a proof by contradiction and show that $\Gamma ,Wills(karli,UL(M)) \land Wills(karli,M) \vdash_{\mathcal{FULL}} \bot$. We leave notation $\Gamma \vdash_{\mathcal{FULL}}$ implicit in derivations.

First, consider $karli$ willing $UL(M)$ (step 1). By Modus Ponens (MP) from $B1$ and the definition of $UL(M)$, we can infer that there exists some agent(s) that false promise to repay another agent:
\begin{align*}
 \exists a_1,a_2:Agent.Does(a_1,falsePromise(a_2)).
\end{align*}

Thus, by MP from $R_3$, $karli$ wills a world in which an agent false promises to repay:
\begin{align*}Wills(karli, & \ \exists a_1,a_2:Agent. \\
&Does(a_1,falsePromise(a_2))).
\end{align*}

Thus, by Modus Tollens (MT)  from $B3$ and MP from from $R_3$ we can infer that $karli$ also wills a world where nobody believes promises:
\begin{align*}
Wills(karli,\forall  a_2 &: Agent.
\\ &\neg BelievesPromise(a_2,
promise(a_2))).
\end{align*}

Therefore, by MT from $B2$ and MP from $R_3$, we get that $karli$ also wills a world in which promising to repay is not a means for having money to travel:
\begin{align*}Wills(karli,\forall &a_1,a_2 : Agent.\\
&\neg Causes(Does(a_1,promise(a_2)),\\
& \quad \quad \quad \quad HasTravelMoney(a_1))).\end{align*}

Substituting $karli$ for $a_1$, $jan$ for $a_2$, and the sub-sort $FalsePromise$ for $Promise$, by Universal Elimination we know that $karli$ wills that her own false promise is not a means for her end of having money to travel:
\begin{align*}Wills(karli,\\ 
\neg Causes(&Does(karli,falsePromise(jan)),\\
&HasTravelMoney(karli))).\end{align*}

By $R_2$ and Negation Elimination we can then get that $karli$ does not will this causal law:
\begin{align*}\neg Wills(karli,\\Causes(&Does(karli,falsePromise(jan)),\\
&HasTravelMoney(karli))).\end{align*}

Now consider $karli$ willing to act on her maxim in the WUL (step 2): $Wills(karli,M)$. By MP from $R_6$ we know that insofar as $karli$ is rational, her action must be a cause of her purpose. Thus, by MP from $R_3$ we know that in willing to act on her maxim, $karli$ also wills the causal law:
\begin{align*}Wills(karli,Causes(Does(karli,falsePromise(jan)),\\
HasTravelMoney(karli)))).\end{align*}

Thus, we have produced our contradiction.
\begin{align*}\Gamma, Wills(Karli, UL(M)) \land  Wills(Karli, M) \vdash_{FUL} \bot
\end{align*}

Therefore, by MT from rule $FUL$ and given our default assumption we get:
\begin{align*}\Gamma \vdash_{\mathcal{FULL}} \neg Perm(M).
\end{align*}

Finally, by definition we get:
\begin{align*}
\Gamma \vdash_{\mathcal{FULL}} Imp(M).
\end{align*}

$Karli$, in willing a world in which everyone false promises to repay in order to get travel money, has willed a world where such false promises are an ineffective means of getting said money. In also willing to act on her maxim in this world, she has willed that such false promises \textit{are} an effective means of getting said money. Thus, this world is contradictory, and therefore acting on her maxim in her own world is morally impermissible.
\end{proof}

Next, we consider a maxim of murder (inspired by \cite{korsgaard1996creating}). As we described in Section \ref{sec:background}, under a logical contradiction interpretation the FUL struggles with such natural actions as they do not become inconceivable in the WUL. However, we illustrate via the $\mathcal{FULL}$ how a practical contradiction interpretation elegantly handles natural actions.

\subsection{Example 2: Murder}\label{sec:murder}

Let $\Gamma$ be a consistent set of formulae of $\mathcal{FULL}$ describing a world. Let $karli$ be our agent of interest from this world. Consider $karli$'s maxim $M$: ``I will murder the other job candidate in order to be hired over them and securely possess this job.'' Where $j:Job$ is the job and $jan$ is the other candidate, maxim $M$ is formalized below.
\begin{align*}
    M = For(Does(karli,murder(jan)),\\
    HiredOver(karli,j,jan) \land \\ SecurelyPoss(karli,j))
\end{align*}

Acting on maxim $M$ is clearly impermissible. Computing this moral norm in $\mathcal{FULL}$ again requires knowledge of common sense and causal facts of the world. We formalize two facts necessary for this case below. Assume $B1$ and $B2 \in \Gamma$.

\textbf{B1:} Murdering other job candidates is a cause of being hired over them and securely possessing the job only if there is not another agent murdering you.
\begin{align*}\forall a_1,a_2:Agent.j_1:Job.
Causes(Does(a_1,murder(a_2)), \\HiredOver(a_1,j_1,a_2) \land \\  SecurelyPoss(a_1,j_1)) \\
\rightarrow \nexists a_3 Does(a_3,murder(a_1))\end{align*}

\textbf{B2:} There is another agent that wills to be hired over $karli$ to securely possess job $j$.\footnote{This is another strong assumption that is justified by the fact that $karli$ can never be completely certain that there are no other agents pursuing her same jobs. In fact, $karli$ can be quite certain that $jan$ wills to be hired over her.}
\begin{align*}\exists a_x:Agent.
&a_x\neq karli \land \\&Wills(a_x,HiredOver(a_x,j,karli)\land \\ & \quad \quad \quad \quad \quad SecurelyPoss(a_x,j))\end{align*}

The corresponding universal law of $karli$'s maxim $M$ is ``for all agents, if there is another candidate that they will to have a secure job over, then they murder them.''
\begin{align*}
     UL(M)= & \forall a_1: Agent. a_2.Agent. j_1:Job. \\
     & Wills(a_1,HiredOver(a_1,j_1,a_2) \land \\ & \quad \quad \quad \quad   SecurelyPoss(a_1,j_1))\\ 
     & \rightarrow 
     Does(a_1,murder(a_2))
\end{align*}

Next we formally prove in $\mathcal{FULL}$ that $karli$'s maxim $M$ is impermissible.

\begin{theorem}$\Gamma \vdash_{\mathcal{FULL}} Imp(M)$
\end{theorem}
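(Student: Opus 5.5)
The proof follows the template of Theorem \ref{thm:false-promising}. By rule $FUL$ and the default assumption, it suffices to show
\[\Gamma, Wills(karli,UL(M))\land Wills(karli,M)\vdash_{\mathcal{FULL}}\bot,\]
since then $\neg Perm(M)$ follows, and hence $Imp(M)$ by definition. The contradiction will be a pair of opposite willings about the causal law linking $karli$'s murder of $jan$ to her purpose: one obtained from the universal law via $B1$ and $B2$, the other from the maxim via $R_6$.

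\textbf{Step 1: willing the universal law.} Let $a_x$ be the witness from $B2$. Instantiate $UL(M)$ by universal elimination with $a_1:=a_x$, $a_2:=karli$ and $j_1:=j$. Modus ponens with the $Wills$ conjunct of $B2$ then yields $Does(a_x,murder(karli))$, and existential introduction gives $\exists a_3.Does(a_3,murder(karli))$.

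Strictly, the object of $karli$'s will is $UL(M)$ itself, not $\Gamma$. So, as in the false-promising proof, I would use $R_3$ to lift the implication from $UL(M)$ (given the background $B2$) to this existential, obtaining
\[Wills(karli,\exists a_3.Does(a_3,murder(karli))).\]
Next, take the contrapositive of $B1$ instantiated at $a_1:=karli$, $a_2:=jan$, $j_1:=j$: the existence of a murderer of $karli$ implies
\[\neg Causes(Does(karli,murder(jan)),HiredOver(karli,j,jan)\land SecurelyPoss(karli,j)).\]
A second application of $R_3$ gives $Wills(karli,\neg Causes(\ldots))$. Then $R_2$ with double-negation elimination gives $\neg Wills(karli,Causes(\ldots))$.

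\textbf{Step 2: willing the maxim.} From $M$, rule $R_6$ gives $Causes(Does(karli,murder(jan)),\ldots)$. Applying $R_3$ to $M\rightarrow Causes(\ldots)$ turns $Wills(karli,M)$ into $Wills(karli,Causes(\ldots))$. This directly contradicts the conclusion of Step 1, which gives $\bot$. The theorem then follows by modus tollens on rule $FUL$ together with the default assumption.

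\textbf{Main obstacle.} The delicate step is the $R_3$ lift in Step 1. $B2$ is a fact in $\Gamma$, so it is not literally part of the willed content. I would handle it exactly as Theorem \ref{thm:false-promising} handles $B1$ there: treat $\Gamma$ as common background, so that $\Gamma\vdash UL(M)\rightarrow\exists a_3.Does(a_3,murder(karli))$ justifies the $R_3$ step. A second point needs care. Here the roles are reversed relative to the first example: the universal law is instantiated with $karli$ as the \emph{victim} of another agent, rather than through agents sharing her purpose. The universal law must therefore be read with $a_2$ universally quantified, as the paper's stated $UL(M)$ does, so that $a_2:=karli$ is a legitimate instantiation.
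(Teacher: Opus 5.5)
Your proposal is correct and follows essentially the same route as the paper. You instantiate $UL(M)$ at the witness from $B2$ to get $karli$ murdered, use $B1$ contrapositively and $R_3$ to reach $Wills(karli,\neg Causes(\ldots))$ and then $\neg Wills(karli,Causes(\ldots))$ via $R_2$, and derive the opposing $Wills(karli,Causes(\ldots))$ from the maxim via $R_3$ before concluding by modus tollens on $FUL$ with the default assumption. The differences are cosmetic: the paper applies $UL(M)$ and $B1$ as plain facts and lifts the resulting $\neg Causes$ into the will with a single $R_3$ step, whereas you are more careful that $UL(M)$ is only willed; and the paper's text cites $R_5$ where $R_6$ (as you use, and as in Theorem~\ref{thm:false-promising}) is the rule that actually yields the causal law.
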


\begin{proof}
We again proceed with a proof by contradiction and show that $\Gamma,Wills(karli,UL(M)) \land Wills(karli,M) \vdash_{\mathcal{FULL}} \bot$.

By MP from $B2$ and $UL(M)$ we know that $a_x$ murders $karli$ (or $karli$ cannot be certain that this will not happen).
\begin{align*}
     Does(a_x,murder(karli))
\end{align*}

Given that $karli$ will be murdered, by MT from $B1$ we know that $karli$'s murdering of other job candidates is not a cause of her being hired over these candidates to securely possess the job:
\begin{align*}\forall a_2:Agent.
\neg Causes(Does(karli,murder(a_2)),\\ HiredOver(karli,j,a_2) \land \\ 
SecurelyPoss(karli,j)). \end{align*}





Thus, by MP from $R_3$ and this fact and substituting $jan$ for $a_2$, by Universal Elimination we know that $karli$ has willed a world where her murdering of $jan$ does not cause her to be hired for and secure in job $j$:
\begin{align*}
    Wills(karli,\neg Causes(Does(karli,murder(jan)),\\
    HiredOver(karli,j,jan) \land \\SecurelyPoss(karli,j))).
\end{align*}

Therefore, by MP from rule $R_3$ and $R_2$ and Negation Elimination we know:
\begin{align*}
    \neg Wills(karli, Causes(Does(karli,murder(jan)),\\
    HiredOver(karli,j,jan) \land \\SecurelyPoss(karli,j))).
\end{align*}

Now consider $karli$'s willing of maxim $M$. By MP from rules $R_3$ and $R_5$ we get that $karli$ wills that her action \textit{is} a cause of her purpose:
\begin{align*}
Wills(karli, Causes(&Does(karli,murder(jan)),\\
&HiredOver(karli,j,jan) \land \\ &SecurelyPoss(karli,j))).
\end{align*}



Therefore, we have reached a contradiction and given our two assumptions of $karli$ willing her own maxim and its universalization we know the following.
\begin{align*}\Gamma, Wills(Karli, UL(M)) \land  Wills(Karli, M) \vdash_{FUL} \bot
\end{align*}

Finally, by MT from rule $FUL$ and given our default assumption, by definition we get:
\begin{align*}
    \Gamma \vdash_{\mathcal{FULL}} \neg Perm(M)\defeq Imp(M).
\end{align*}

$karli$, in willing a world in which everyone murders other job candidates in order to secure a job, and willing to act on such a maxim herself in this world, wills a world where such murdering is and is not an effective means of securing the job. Thus, this world is contradictory, and therefore acting on her maxim is morally impermissible.
\end{proof}

To illustrate the importance of the $\mathcal{FULL}$ being an agent-centric theory, let us say $karli$ murdered $jan$ by cutting her with a knife and contrast this with a case in which $karli$ is a surgeon cutting into her wiling patient $jan$ for a life-saving surgery. Even given the same harmful action, the $\mathcal{FULL}$ would evaluate it differently given this change of purpose. In this case, cutting a willing patient is still a successful means of saving their life in the World of the Universal Law. That is, this maxim would not produce a contradiction in the WUL. Therefore, by rule $FUL$, surgeon $karli$'s maxim would now be evaluated as permissible. By contrast, current approaches would would either evaluate surgery as impermissible, or require humans to encode further moral intuition in the axioms noting surgery as an exception to harm.

Next, we consider the maxim of omitting aid to others. As we described in Section \ref{sec:background}, the FUL struggles in such cases of omission without introducing different types of contradictions or making teleological assumptions. However, we again illustrate with the $\mathcal{FULL}$ how the FUL handles these cases under a practical contradiction interpretation.


\subsection{Example 3: Never helping others}\label{sec:never-help}

Let $\Gamma$ be a consistent set of formulae of $\mathcal{FULL}$ describing a world. Let $karli$ be our agent of interest. Consider $karli$'s maxim $M:$ \textit{``I will not help others in order to have time for leisure,''} represented below.
\begin{align*}
    M = For(\forall a_x:Agent.\neg Does(karli,help(a_x)),\\
    LeisureTime(karli))
\end{align*}

Acting on maxim $M$ is intuitively impermissible. We formalize two background facts necessary for this case below. Assume $B1,B2 \in \Gamma$.

\textbf{B1:} All agents will time for leisure.
\begin{align*}
    \forall a_1 : Agent.Wills(a_1,LeisureTime(a_1))
\end{align*}

\textbf{B2:} The life of all rational agents requires that some other agent helps them.\footnote{We are finite creatures, and thus cannot do everything by ourselves. We need someone to feed and care for us as infants. We need doctors when we are sick. We require such aid at some times and places, but not universally.}
\begin{align*}
    \forall a_1 :Agent.&Alive(a_1)\rightarrow \\
    &\exists a_2:Agent.a_2\neq a_1 \land Does(a_2,help(a_1))
\end{align*}

The corresponding universal law of $karli$'s maxim $M$ is, ``everyone that wills leisure time does not help others,'' formally represented below.
\begin{align*}
    \forall a_1,a_3:Agent. Wills(a_1,LeisureTime(a_1))\rightarrow \\
    \neg Does(a_1,help(a_3))
    \end{align*}

Next we formally prove in $\mathcal{FULL}$ that maxim $M$ is impermissible in the world described by $\Gamma$.

\begin{theorem}\label{thm:never-helping}
$\Gamma \vdash_{\mathcal{FULL}} Imp(M)$\end{theorem}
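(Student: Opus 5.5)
As in Theorems 1 and 2, the plan is to apply the negative test encoded by rule $FUL$. I will show
\begin{align*}
\Gamma, Wills(karli,UL(M)) \land Wills(karli,M) \vdash_{FUL} \bot,
\end{align*}
and then conclude $\neg Perm(M)$, hence $Imp(M)$, by Modus Tollens from $FUL$ together with the default assumption. The contradiction will not come from a clash between ineffectiveness and effectiveness of a causal law, as in the previous examples. Instead it will come from the necessities of the will, rule $R_4'$: in willing the universal law, $karli$ wills a world in which no willing agent is alive.

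\textbf{Step 1: what the universal law implies.} I work inside the content of $karli$'s will. From $B1$, every agent wills leisure time, so Modus Ponens on $UL(M)$ gives
\begin{align*}
\forall a_1,a_3:Agent.\neg Does(a_1,help(a_3)).
\end{align*}
Conjoining this with $B2$ by Modus Tollens yields $\forall a_1:Agent.\neg Alive(a_1)$, and in particular $\neg Alive(karli)$. Since the implication from $UL(M)$ (with $B1,B2$) to $\neg Alive(karli)$ is derivable, rule $R_3$ applied to $Wills(karli,UL(M))$ gives
\begin{align*}
Wills(karli,\neg Alive(karli)).
\end{align*}

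\textbf{Step 2: what acting on the maxim implies.} From $Wills(karli,M)$ and $R_5$, under $R_3$, $karli$ wills $LeisureTime(karli)$, so she wills something at all. By $R_4'$ this gives $Alive(karli)$. Applying $R_3$ to $R_4'$ itself inside the will then gives $Wills(karli,Alive(karli))$.

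\textbf{Step 3: the contradiction.} Rule $R_2$ applied to $Wills(karli,Alive(karli))$ gives $\neg Wills(karli,\neg Alive(karli))$, which contradicts Step 1.

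\textbf{Main obstacle.} The delicate point is using $B1$ and $B2$ inside the scope of $Wills$. Rule $R_3$ needs an implication $\phi_1\rightarrow\psi$ as a premise, so I need the implication from $UL(M)$ to $\neg Alive(karli)$ to be derivable from $\Gamma$. Theorem 1 handled background facts inside the will in the same way, so I will follow that precedent. A simpler alternative avoids willing aliveness entirely: pair the actual fact $Alive(karli)$ from Step 2 directly with the world $karli$ has willed. I expect to present the $R_2$ route because it mirrors Theorems 1 and 2.
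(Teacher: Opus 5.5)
Your proposal is correct and is essentially the paper's proof. Both derive $Wills(karli,\neg Alive(karli))$ from $Wills(karli,UL(M))$ via $B1$, $B2$ and $R_3$, derive $Wills(karli,Alive(karli))$ from $Wills(karli,M)$ via $R_4'$ and $R_3$, and clash through $R_2$; the paper applies $R_2$ to the $\neg Alive(karli)$ side and so needs an extra double-negation step that your direction avoids. Drop the ``simpler alternative'', though: the actual fact $Alive(karli)$ together with $Wills(karli,\neg Alive(karli))$ gives no contradiction in $\mathcal{FULL}$, since no rule makes $Wills$ factive.
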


\begin{proof}
We again proceed with a proof by contradiction and show: $\Gamma,Wills(karli,UL(M)) \land Wills(karli,M) \vdash_{\mathcal{FULL}} \bot$.

By MP from $R_3$ given $UL(M)$ and $B1$, we get:
\begin{align*}
    Wills(karli,\forall &a_1, a_3 : Agent.\\
    &\neg Does(a_1,action(a_1,help(a_3)))).
\end{align*}

Therefore, by Universal Elimination substituting $karli$ for $a_3$ and re-representing as a negated existential, we can infer that $karli$ wills a world where no agent helps her.  
\begin{align*}
    Wills(&karli, \\ 
    &\nexists a_1 : Agent.Does(a_1,action(a_1,Help(karli))))
\end{align*}

By MT from this fact and $B2$ and MP from $R3$ we know that Karli has willed a world in which she is not alive:
\begin{align*}
Wills(karli,\neg Alive(karli)).
\end{align*}

By MP from $R_3$ and $R_2$ and Negation Elimination we get:
\begin{align*}
 \neg Wills(karli,Alive(karli)).
\end{align*}

However, by $R_4'$, $karli$'s willing of her maxim (really, the willing of anything) necessitates certain facts, specifically that $karli$ is alive. Therefore, by MP from $R_3$, $R'_4$, and $Wills(karli,M)$ we get:
\begin{align*}
    Wills(karli,Alive(karli)).
\end{align*}

We now have our contradiction.
\begin{align*}\Gamma, Wills(Karli, UL(M)) \land  Wills(Karli, M) \vdash_{FUL} \bot
\end{align*}

Therefore, by MT from rule $FUL$ and given our default assumption, willing such a maxim in $karli$'s own world $\Gamma$ is impermissible:
\begin{align*}
    \Gamma \vdash_{\mathcal{FULL}} \neg Perm(M)\defeq Imp(M).
\end{align*}

$karli$, in willing her maxim of not helping others for leisure time and willing this as a universal law, wills that she is alive and that she is not alive. Thus, never helping others is no longer an effective means of having time for leisure in this world, as she is not alive to will time for leisure. Therefore, acting on her maxim is morally impermissible.
\end{proof}

A corollary of obligation follows.

\begin{corollary} $karli$ must help some agent(s).
\begin{align*}
\Gamma \vdash_{\mathcal{FULL}} Obl(For(\exists a_x:Agent. Does(karli,help(a_x)), \\
LeisureTime(karli)))
\end{align*}
\end{corollary}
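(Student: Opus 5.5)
The corollary should follow almost directly from Theorem \ref{thm:never-helping} once the definition of $Obl$ is unfolded. By definition,
\begin{align*}
Obl(For(\phi_1,\phi_2)) \defeq Imp(For(\neg \phi_1,\phi_2)).
\end{align*}
So I will take $\phi_1 = \exists a_x:Agent.Does(karli,help(a_x))$ and $\phi_2 = LeisureTime(karli)$. The goal then reduces to showing $\Gamma \vdash_{\mathcal{FULL}} Imp(For(\neg\exists a_x:Agent.Does(karli,help(a_x)),LeisureTime(karli)))$.

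The first step is to relate this negated-existential maxim to the maxim $M$ of Theorem \ref{thm:never-helping}, whose behavior is $\forall a_x:Agent.\neg Does(karli,help(a_x))$. By the first-order quantifier duality, available through the natural deduction introduction and elimination rules, $\neg\exists a_x.\,Does(karli,help(a_x))$ and $\forall a_x.\neg Does(karli,help(a_x))$ are interderivable. I will argue that the two maxims are therefore interchangeable inside $For$ and $Imp$.

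The second step handles how the universal law transformation treats the two behaviors. It is cleaner not to appeal to any substitution of equivalents inside the modal operators. Instead I will rerun the FUL test directly and show
\begin{align*}
\Gamma, Wills(karli,UL(M')) \land Wills(karli,M') \vdash \bot,
\end{align*}
where $M'$ is the negated-existential maxim. The constants in $M'$ are the same as in $M$ (only $karli$, with $a_x$ already bound), so $UL(M')$ has the same shape as $UL(M)$ up to the equivalent rewriting of the consequent. Each step of the proof of Theorem \ref{thm:never-helping} then goes through unchanged:
\begin{itemize}
\item $B1$ and $R_3$ give $Wills(karli,\neg\exists a_1.Does(a_1,help(karli)))$;
\item $B2$ and $R_3$ give $Wills(karli,\neg Alive(karli))$;
\item $R_2$ gives $\neg Wills(karli,Alive(karli))$;
\item $R_4'$ and $R_3$ applied to $Wills(karli,M')$ give $Wills(karli,Alive(karli))$, which is the contradiction.
\end{itemize}
Rule $FUL$ together with the default assumption then yields $\neg Perm(M')$, that is, $Imp(M')$, and by the definition of $Obl$ the corollary follows.

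The main obstacle is the legitimacy of treating the two logically equivalent behavior formulas as yielding the same maxim and the same universal law. $For$ and $Imp$ are modal operators, and the paper gives no explicit congruence rule for them. I avoid needing one by re-deriving the contradiction for $M'$ directly. That re-derivation only requires $R_3$ (monotonicity of the will under first-order entailment) to move between $\neg\exists$ and $\forall\neg$ inside $Wills$, and $R_3$ licenses exactly that.
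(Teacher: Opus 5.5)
Your proposal is correct but takes a different route from the paper.

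\textbf{The paper's route.} The paper's proof is a one-step unfolding. It takes $Imp(For(\forall a_x:Agent.\neg Does(karli,help(a_x)),LeisureTime(karli)))$ from Theorem~\ref{thm:never-helping} and reads it as $Obl(For(\neg(\forall a_x:Agent.\neg Does(karli,help(a_x))),LeisureTime(karli)))$. It then rewrites $\neg\forall\neg$ as $\exists$ under $Obl$ and $For$. This silently uses double-negation elimination and quantifier duality \emph{inside} the modal operators. That is a replacement-of-equivalents principle which none of $R_1$--$R_6$ or $FUL$ supplies.

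\textbf{Your route.} You instantiate the definition so it matches syntactically: $\phi_1=\exists a_x:Agent.Does(karli,help(a_x))$, so the target is $Imp(M')$ with behavior $\neg\exists a_x:Agent.Does(karli,help(a_x))$. You then rerun the FUL test on $M'$, converting between $\neg\exists$ and $\forall\neg$ only inside $Wills$, where $R_3$ genuinely licenses it.

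\textbf{What each buys.} The paper's version is shorter and makes the result literally a corollary of Theorem~\ref{thm:never-helping}. Yours is really a parallel theorem, but it stays within the stated calculus. This matters because no rule makes $Wills(a,For(\phi_1,\phi_2))$ and $Wills(a,For(\phi_1',\phi_2))$ interderivable when $\phi_1\leftrightarrow\phi_1'$. Likewise, $R_6$ produces $Causes$ statements, which have no congruence rule either.

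\textbf{Smaller points.} Drop your first-step promise to show the maxims are ``interchangeable inside $For$ and $Imp$''; your rerun neither needs nor establishes it. Two compressions in your bullet list mirror the paper's own:
\begin{itemize}
\item Passing from $Wills(karli,\neg Alive(karli))$ via $R_2$ to $\neg Wills(karli,Alive(karli))$ also needs $R_3$ to strip the double negation. Alternatively, apply $R_2$ to $Wills(karli,Alive(karli))$ instead.
\item Getting $Wills(karli,Alive(karli))$ from $Wills(karli,M')$ needs $R_5$ first, to obtain $M'\rightarrow Wills(karli,LeisureTime(karli))$. Then $R_4'$ gives $M'\rightarrow Alive(karli)$, and only then does $R_3$ apply.
\end{itemize}
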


\begin{proof}
By Theorem \ref{thm:never-helping} we know:
\begin{align*}
    Imp(For(\forall a_x:Agent.\neg Does(karli,help(a_x)),\\
    LeisureTime(karli))) .
\end{align*}

We know: $Obl(For(\phi_1,\phi_2)) \defeq Imp(For(\neg \phi_1,\phi_2))$. Thus, we know:
\begin{align*}
&Obl(For(\neg(\forall a_x:Agent.\neg Does(karli,help(a_x))),\\
    &\quad \quad \quad \quad LeisureTime(karli)))  \\
\equiv \ &Obl(For(\exists a_x:Agent. Does(karli,help(a_x)),\\
    &\quad \quad \quad \quad LeisureTime(karli))).
\end{align*}

Therefore, $karli$ must help some agent(s).
\end{proof}

\section{Types of Contradictions and Duties}\label{sec:cc-cw}

We have demonstrated how the $\mathcal{FULL}$ derives moral norms by deriving contradictions given a theory of causality and rational agency. Note that the contradictions produced in the cases of false promising (example \ref{sec:false-promise}) and murder (example \ref{sec:murder}) differ in form from that of omitting help (example \ref{sec:never-help}). In the former cases, the contradictions involved the purpose \textit{within the maxim} (has travel money and being hired over). In Kantian ethics, such contradictions are termed \textit{contradictions in conception (CC)}\footnote{We note that philosophers have disagreed on this for violent actions such as murder. See \cite{herman1993practice} for an argument that this is instead a contradiction in the will.}, which yield \textit{perfect (or narrow) duties}, as they command exactly what action to do/omit ($karli$ must not false promise nor murder). 

However, in the case of omitting help we derived a contradiction involving a necessary purpose \textit{of the will} (being alive). In Kantian ethics, these are termed \textit{contradictions in the will (CW)}, which yield \textit{imperfect (or wide) duties}, as they command only to sometimes, and to some extent, do/omit some action ($karli$ must help someone at some point). Thus, the $\mathcal{FULL}$ can sort types of contradictions and duties as well. We formalize this below.

\begin{definition}[Contradiction in Conception]\label{def:contradiction-conception}
Given agent $a$ in a world described by $\Gamma$ with maxim $M=For(\phi_1,\phi_2)$, a contradiction in conception occurs when:
\begin{align*}
     \Gamma, Wills(a, M) \land Wills(a, UL(M)) \vdash_{\mathcal{FULL}} \\
     Wills(a,Causes(\phi_1,\phi_2)) \ \land \\
    Wills(a,\neg Causes(\phi_1,\phi_2)).
\end{align*}
\end{definition}

Contradictions in conception yield perfect duties to not $\phi_1$ to the fullest extent possible, given purpose $\phi_2$.

\begin{definition}[Contradiction in the Will]\label{def:contradiction-will}
Given agent $a$ in a world described by $\Gamma$ with maxim $M=For(\phi_1,\phi_2)$, and $\psi$ is a fact necessary to willing: $Wills(a,\phi) \vdash_{\mathcal{FULL}} \psi$, a contradiction in the will occurs when:
\begin{align*}
    \Gamma,Wills(a, UL(M)) \land Wills(a,M) \vdash_{\mathcal{FULL}} \\
    Wills(a,\psi) \land Wills(a,\neg \psi).
\end{align*}

\end{definition}

Contradictions in the will yield imperfect duties that command the agent to adopt the necessary end $\psi$, and therefore to sometimes, and to some extent, not $\phi_1$, given purpose $\phi_2$.

\section{Related Work}

Early work by \cite{kroy1976partial} provides partial formalizations of Kant's first and second formulations utilizing Hintikka's possible-world semantics \cite{hintikka1969deontic}. They present a universalization technique similar to the one we present here. However, they formalize a more trivial Golden Rule: ``Don't do to others what you don't want done to yourself.''

In more recent work, \cite{lindner2018formalization} formalize Kant's second formulation. They utilize causal-agency models \cite{halpern2016actual}, which we plan to explore in future work. However, because Kant's second formulation is less formal than his first, they end up grounding moral evaluation in an informal and subjective notion of affect.


Most related to our work here is  \cite{singh2022automated}, which presents a formalization of Kant's first formulation under a PCI. They utilize dyadic deontic logic \cite{carmo2013completeness} and test their theory in the interactive theorem prover Isabelle \cite{wenzel2008isabelle}. Our work expands on theirs by formalizing the notion of effectiveness from the PCI in a way that better captures its derivation. We fully express the causal nature of effectiveness with defeasible causal laws and reason to contradictions based on formal principles of the will. Our formalism also distinguishes between contradictions in conception and contradictions in the will and can thus sort perfect and imperfect duties.

\section{Discussion and Future Work}

Here we focus on the form of the arguments made from the FUL and have thus presented the $\mathcal{FULL}$ in a proof-theoretic fashion. However, we plan to explore a model-theoretic account of intensional concepts (e.g., intentional actions \cite{Friedenberg2025}) in future work. We also plan to expand the expressiveness of the $\mathcal{FULL}$. As has been noted in the literature \cite{powers2006prospects}, concepts such as promising survive occasional defeats that cannot be formalized with standard quantifiers $\forall$ and $\exists$. Thus, future work will require building in generalized quantifiers that model concepts like few and most \cite{generalizedquantSEP}.

Lastly, we have assumed the process of formulating maxims. This is a sincere limitation because it further assumes that input maxims are properly specified. The tragedy of Oedipus has instead shown us that action descriptions matter for evaluation. For example, if we instead described $karli$'s murdering of $jan$ as ``$karli$ destroys a collection of cells'', which is a true but not properly specific description, then her maxim may be deemed permissible. Philosophers have argued that pinning down these descriptions likely still requires moral intuition and judgment \cite{herman1993practice,o2013acting}. Thus, though the $\mathcal{FULL}$ reduces the need for human moral intuition to be encoded in AMAs, it may not fully remove it. Both Herman and O'Neill presented possible solutions that we plan to explore.

\section{Conclusion}

This paper presents the $\mathcal{F}$ormula of the $\mathcal{U}$niversal $\mathcal{L}$aw $\mathcal{L}$ogic ($\mathcal{FULL}$). The $\mathcal{FULL}$ is a multi-sorted quantified modal logic that formalizes Kant's first formulation of the categorical imperative, the Formula of the Universal Law (FUL), and concepts related to rational agency and causality. Through formal proofs on canonical examples in Kantian ethics we have demonstrated that the FUL serves as a better foundation for automated moral reasoning. First, it does not simply evaluate actions/omissions, but also agents' purposes. Second, it does not require humans to enumerate, a priori, our moral intuition as a set of moral axioms. Nor does it require agents to make the fallacious inference from the judgments of a population to moral norms. Instead, the $\mathcal{FULL}$ reasons top-down from formal, non-normative principles to moral norms. Therefore, this paper contributes towards a more formal understanding of Kantian ethics and improving the robustness of future Artificial Moral Agents.

\section{Acknowledgments}
We thank Kyla Ebels-Duggan for thoughtful discussions on Kantian ethics over the years that helped shape ideas in this work (though these views are not necessarily shared by her).

\bibliography{references}

\end{document}